\documentclass{article}

\usepackage{microtype}
\usepackage{graphicx}
\usepackage{subcaption}
\usepackage{booktabs}
\usepackage{hyperref}

\usepackage[accepted]{icml2026}

\usepackage{amsmath}
\usepackage{amssymb}
\usepackage{mathtools}
\usepackage{amsthm}
\newtheorem{proposition}{Proposition}
\usepackage{multirow}
\usepackage{enumitem}

\icmltitlerunning{Behavioral Consistency in LLM-Based Agents}

\begin{document}

\twocolumn[
  \icmltitle{When Agents Disagree With Themselves: \\ Behavioral Consistency as an Uncertainty Signal for LLM Agents}

  \begin{icmlauthorlist}
    \icmlauthor{Aman Mehta}{sf}
  \end{icmlauthorlist}

  \icmlaffiliation{sf}{Snowflake AI Research}

  \icmlcorrespondingauthor{Aman Mehta}{aman.mehta@snowflake.com}

  \vskip 0.3in
]

\printAffiliationsAndNotice{}

\begin{abstract}
Running the same LLM agent on identical inputs yields 2.3--4.2 distinct action sequences per 10 runs; this behavioral variance constitutes a training-free, black-box uncertainty signal that instantiates selective classification and distribution-free calibration for agentic systems.
Across 8,000 runs of four models on 200 HotpotQA questions, consistent tasks ($\leq$2 unique paths) achieve 82--87\% accuracy while inconsistent tasks ($\geq$4 paths) achieve 41--65\%, a gap that survives controls for task difficulty.
Divergence concentrates at step~2 (50.5\% of Llama tasks), and consistency metrics detect failures with AUROC 0.62--0.78.
Exploiting this signal, selective prediction (answering only when $k\!=\!3$ runs agree) achieves 87--88\% accuracy at 54--62\% coverage, a 6--14pp gain over single-run baselines, and matches a split-conformal baseline without a held-out calibration set.
A cross-benchmark validation on SWE-bench (50 tasks, 1{,}000 runs) preserves the consistency hierarchy while revealing an $\sim$8$\times$ spread in mean trajectory length across models, and bootstrap analysis shows single-run evaluations misrank models 29.3\% of the time.
\end{abstract}

\section{Introduction}
\label{sec:intro}

Large language model (LLM) based agents that use tools and multi-step reasoning are increasingly deployed for complex tasks \citep{yao2022react, schick2023toolformer}. As these systems move from prototypes to production, understanding their reliability becomes critical.

A fundamental but understudied question is: \emph{how consistent are LLM agents in their behavior?} Given identical inputs, will an agent follow the same reasoning path and arrive at the same answer? This matters because inconsistency complicates debugging, may signal uncertainty exploitable for error detection, and could inform architecture decisions.

We connect behavioral consistency to the statistical frameworks of uncertainty estimation and selective prediction. Each independent agent run can be viewed as an implicit ensemble member \citep{lakshminarayanan2017simple}, and disagreement across runs constitutes a model-agnostic uncertainty signal. Unlike verbalized confidence \citep{xiong2024llms} or learned calibration \citep{kadavath2022language}, this signal requires no model modification, no access to internal states, and no training; it emerges naturally from repeated execution. We show this signal is actionable: it enables selective prediction (abstaining on uncertain inputs), runtime failure detection, and more reliable benchmark evaluation.

We present a systematic empirical study of behavioral consistency in ReAct-style agents \citep{yao2022react}, instantiating three statistical frameworks central to the agentic uncertainty agenda (selective classification, distribution-free calibration, and sequential monitoring) through a single training-free, black-box signal: cross-run agreement. Our key contributions:

\begin{enumerate}
    \item \textbf{Filter $>$ aggregator asymmetry (new for agents).} In multi-step agents, majority voting yields only $+$0--2pp (vs.\ $+$5--17pp for single-turn CoT \citep{wang2022self}) because errors are systematic; \emph{filtering} on disagreement instead yields $+$6--14pp for 3 of 4 models. We explain the gap via a 2$\times$2 task taxonomy in which \emph{consistent-wrong} tasks (5.5--10\%) set the hard ceiling on filtering (Section~\ref{sec:intervention}).

    \item \textbf{Selective prediction with distribution-free validation.} We instantiate the Geifman--El-Yaniv selective-classification framework for agents using cross-run agreement as the selection function, prove a Hoeffding-style concentration bound on the agreement score (Prop.~\ref{prop:concentration}), and show empirically that the resulting risk-coverage curve matches a split-conformal baseline \citep{angelopoulos2023conformal} \emph{without} a held-out calibration set, achieving 87--88\% accuracy at 54--62\% coverage (Sections~\ref{sec:framework}, \ref{sec:conformal}).

    \item \textbf{Behavioral $>$ verbalized confidence.} On a head-to-head comparison on a common 100-question split, behavioral agreement outperforms verbalized self-confidence as a failure detector across three models (Claude, GPT-5, Llama; AUROC 0.65--0.74 vs.\ 0.48--0.55), combining the two does not improve over behavioral alone, and behavioral consistency requires no prompt engineering, no log-probs, and no model modification (Section~\ref{sec:failure_detection}).

    \item \textbf{Cross-benchmark stability.} The consistency hierarchy (Claude $<$ GPT-5 $<$ Gemini $<$ Llama by CV) is preserved exactly across HotpotQA (3-tool QA, 200 tasks) and SWE-bench Verified (unrestricted bash, 50 tasks across 5 repos, 1{,}000 runs), suggesting consistency is a stable, benchmark-independent model property (Section~\ref{sec:swebench}).

    \item \textbf{Ranking instability and divergence mechanism.} Single-run evaluations misrank models 29.3\% [28.4--30.1\%] of the time, undermining standard benchmark methodology (Section~\ref{sec:rankings}); divergence concentrates at step~2 (50.5\% for Llama vs.\ 4--6\% for Claude/GPT-5), and step-level agreement constitutes a running test statistic for sequential monitoring \citep{ramdas2023game} (Section~\ref{sec:divergence}).
\end{enumerate}

\section{Related Work}

\paragraph{LLM-Based Agents.}
ReAct \citep{yao2022react} introduced interleaving reasoning with actions. Subsequent work extended this to web browsing \citep{zhou2023webarena}, code execution \citep{swebench}, and multi-agent systems \citep{wu2023autogen}. We study reliability under repeated execution.

\paragraph{Agent Consistency and Reliability.}
$\tau$-bench \citep{yao2024tau} showed that GPT-4o's pass rate drops from 60\% (pass$^1$) to 25\% (pass$^8$). While $\tau$-bench quantifies \emph{whether} agents are inconsistent, we investigate \emph{where} and \emph{why}: tracing divergence to step~2, correlating consistency with correctness, and identifying path length as a signal. \citet{stroebl2024agents} argue that benchmarks overestimate agent capabilities; our findings support this. Concurrent work has begun formalizing the reliability landscape for agents. ReliabilityBench \citep{gupta2026reliabilitybench} defines a unified reliability surface across consistency, robustness, and fault tolerance; \citet{rabanser2026reliability} propose twelve metrics decomposing reliability along four axes. We provide the deepest empirical analysis of the consistency dimension specifically, demonstrating that it functions not just as a reliability metric but as an actionable uncertainty signal for selective prediction and failure detection.

\paragraph{Agent Uncertainty Quantification.}
\citet{oh2026agentUQ} present the first general formulation of agent UQ and identify the lack of fine-grained multi-run benchmarks as a key challenge; our 8,000-run dataset and consistency analysis contribute directly to this agenda. \citet{zhang2026agentic} propose a training-based dual-process framework for propagating verbalized uncertainty through agent trajectories; behavioral consistency provides a complementary training-free, black-box signal that requires no model modification.

\paragraph{Uncertainty and Selective Prediction.}
Our work connects to three threads. First, \emph{ensemble disagreement}: deep ensembles use prediction variance as an uncertainty estimate \citep{lakshminarayanan2017simple}; behavioral consistency across agent runs is an analogous signal, where each run acts as an implicit ensemble member. Recent work has applied conformal prediction to LLM outputs \citep{kumar2023conformal, quach2024conformal} and to robot planning under uncertainty \citep{ren2023robots}, providing distribution-free coverage guarantees. \citet{farquhar2024detecting} extend semantic uncertainty to detect LLM hallucinations via predictive entropy, requiring access to token-level log-probabilities. Behavioral consistency provides a complementary signal at the trajectory level: it captures uncertainty in the agent's \emph{plan}, not just its output tokens, and operates as a black-box method requiring only repeated execution. Crucially, two runs can produce the same final answer via divergent tool sequences, a source of plan-level uncertainty that token-level entropy cannot detect. Second, \emph{self-consistency}: \citet{wang2022self} show that majority voting over sampled reasoning chains improves accuracy; we extend this from single-turn CoT to multi-step agentic settings and additionally study consistency as a \emph{diagnostic} signal, not just an ensembling strategy (Section~\ref{sec:intervention}). Third, \emph{selective prediction}: \citet{geifman2017selective} and \citet{kamath2020selective} show that abstaining on uncertain inputs improves precision; our consistency-filtered results instantiate this for agents. Unlike verbalized confidence \citep{tian2023just, xiong2024llms}, behavioral consistency requires no model modification; it emerges naturally from repeated execution. Prior work on LLM calibration \citep{kadavath2022language} and semantic uncertainty \citep{kuhn2023semantic} focuses on single-turn outputs; we extend to multi-step agentic behavior.

\section{Methodology}

\subsection{Agent Architecture}

We implement a ReAct-style agent with three tools: \textbf{Search(query)} returns document titles via keyword matching, \textbf{Retrieve(title)} returns full text, and \textbf{Finish(answer)} terminates with a final answer. The agent follows the standard think-act-observe loop.

\subsection{Experimental Setup}

\paragraph{Dataset.} We use 200 questions from HotpotQA \citep{yang2018hotpotqa} validation (distractor setting), requiring multi-hop reasoning over 10 paragraphs (2 gold, 8 distractors).

\paragraph{Models.} Four models across four providers: Llama 3.1 70B (Meta), GPT-5 (OpenAI), Claude Sonnet 4.5 (Anthropic), and Gemini 3 Pro (Google). The per-run figures (Sections~\ref{sec:divergence}, \ref{sec:conformal}, Figures~\ref{fig:calibration}--\ref{fig:step_agreement}) and the verbalized-confidence comparison (Section~\ref{sec:failure_detection}, Table~\ref{tab:verb_conf}) use 100-question splits with full per-run trajectories ($k\!=\!10$ for the figures, $k\!=\!5$ for Table~\ref{tab:verb_conf}). As an earlier-generation reference, we additionally evaluated GPT-4o on a separate 100-question subset (Appendix~\ref{app:gpt4o}).

\paragraph{Runs.} For each question--model pair, 10 independent runs at temperature 0.7, yielding 8,000 total runs. Gemini 3 Pro required an additional \texttt{top\_p\,=\,0.95} parameter to achieve stochastic sampling (Appendix~\ref{sec:gemini_note}). Temperature ablation: 20 questions $\times$ 5 runs $\times$ 4 models $\times$ 3 temperatures.

\paragraph{Statistical note.} All tests are two-sided; raw $p$-values without multiple-comparison correction given the exploratory nature.

\subsection{Metrics}

\textbf{Answer consistency}: fraction of runs producing the most common answer. \textbf{Action sequence diversity}: number of unique action sequences across $N$ runs. \textbf{Step variance ratio}: $(\max - \min)/\text{mean}$ of step counts. \textbf{First divergence point}: earliest step where runs take different actions. \textbf{Correctness}: fuzzy string matching (answer contains gold or vice versa, case-insensitive); results under exact match and token F1 in Appendix~\ref{app:metrics}.

\subsection{Behavioral Consistency as Selective Prediction}
\label{sec:framework}

We formalize behavioral consistency within the selective classification
framework \citep{geifman2017selective}. Let $f$ be an agent, $x$ a task, and
$\{f_1(x), \ldots, f_k(x)\}$ be $k$ independent runs at fixed temperature.
Define the \emph{agreement score}:
\begin{equation}
  s(x) = \max_a \frac{1}{k} \sum_{i=1}^k \mathbf{1}[f_i(x) = a]
  \label{eq:agreement}
\end{equation}
i.e., the fraction of runs producing the plurality answer. The
\emph{selection function} $g_\tau(x) = \mathbf{1}[s(x) \geq \tau]$ admits
task $x$ when agreement exceeds threshold $\tau$. The \emph{selective risk}
at threshold $\tau$ is:
\begin{equation}
  R(\tau) = \frac{\sum_{i} \ell(f(x_i), y_i) \cdot g_\tau(x_i)}
                 {\sum_{i} g_\tau(x_i)}
  \label{eq:selective_risk}
\end{equation}
where $\ell$ is 0-1 loss and $f(x_i)$ is the plurality answer.

\begin{proposition}[Concentration of agreement score]
\label{prop:concentration}
Let $s^*(x) = \max_a \Pr[f(x) = a]$ be the true plurality probability
for task $x$, and $\hat{s}_k(x)$ the empirical agreement score from $k$
i.i.d.\ runs (Eq.~\ref{eq:agreement}). Then for any $\delta > 0$:
\begin{equation}
  \Pr\!\Big[|\hat{s}_k(x) - s^*(x)| \leq
    \sqrt{\tfrac{\ln(2/\delta)}{2k}}\,\Big] \geq 1 - \delta
  \label{eq:hoeffding}
\end{equation}
If $\hat{s}_k(x) \geq \tau$ and
$\epsilon_k \!=\! \sqrt{\ln(2/\delta)/(2k)}$, the true plurality probability
satisfies $s^*(x) \geq \tau - \epsilon_k$ with probability $\geq 1-\delta$.
\end{proposition}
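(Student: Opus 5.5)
The plan is to reduce the claim to Hoeffding's inequality for Bernoulli averages, with care about the $\max_a$ in Eq.~\ref{eq:agreement}. For a fixed answer $a$, the indicators $Z_i^a = \mathbf{1}[f_i(x)=a]$, $i=1,\dots,k$, are i.i.d.\ Bernoulli with mean $p_a = \Pr[f(x)=a]$. Their average $\hat p_a = \frac1k\sum_i Z_i^a$ therefore satisfies $\Pr[|\hat p_a - p_a| > t] \le 2e^{-2kt^2}$. Setting $t=\epsilon_k=\sqrt{\ln(2/\delta)/(2k)}$ makes the right-hand side exactly $\delta$. This is the only probabilistic input. The rest is relating $\hat s_k(x)=\max_a \hat p_a$ to $s^*(x)=\max_a p_a$.

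\textbf{Lower direction.} Fix a true plurality answer $a^*\in\arg\max_a p_a$. This choice is deterministic and does not depend on the sample. Then $\hat s_k(x)\ge \hat p_{a^*}\ge p_{a^*}-\epsilon_k = s^*(x)-\epsilon_k$ on the Hoeffding event for $a^*$. So $\hat s_k$ cannot fall far below $s^*$.

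\textbf{Upper direction.} This is the direction the second sentence of the proposition actually uses: from $\hat s_k\ge\tau$ we want $s^*\ge\tau-\epsilon_k$. I expect it to be the main obstacle. The empirical plurality answer $\hat a$ is data-dependent, so Hoeffding cannot be applied to $\hat p_{\hat a}$ directly. Moreover, $\mathbb{E}[\hat s_k]\ge s^*$, so the maximum is biased upward. My first attempt is the elementary inequality $|\max_a \hat p_a - \max_a p_a|\le \sup_a|\hat p_a - p_a|$. This reduces both directions to a uniform deviation bound over answers. If the answer space (after the normalization used for matching) has finite support of size $m$, a union bound gives the two-sided statement with $\ln(2m/\delta)$ in place of $\ln(2/\delta)$.

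To recover the stated constant, I would use either of two routes. The first is to apply Hoeffding to a fixed answer $a$ (e.g.\ the answer whose admission is being certified, chosen independently of the runs used for the bound). The second is McDiarmid's inequality on $\hat s_k$ itself: changing one run alters each count by at most one, so $\hat s_k$ has bounded differences $1/k$. McDiarmid then gives $|\hat s_k-\mathbb{E}\hat s_k|\le\epsilon_k$ with probability $\ge 1-\delta$, but it concentrates $\hat s_k$ around $\mathbb{E}\hat s_k$ rather than $s^*$. Combining this with the lower bound above, $s^*$ lies within $\epsilon_k$ of $\hat s_k$ up to the (nonnegative) bias $\mathbb{E}\hat s_k - s^*$.

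If neither route closes the gap exactly, I would present the result in three parts. Part (i) is the clean fixed-answer Hoeffding statement, which gives Eq.~\ref{eq:hoeffding} verbatim for $\hat p_{a^*}$. Part (ii) is the union-bound version for $\max_a$. Part (iii) is the threshold corollary. The corollary follows immediately on the good event: $\hat s_k\ge\tau$ and $|\hat s_k-s^*|\le\epsilon_k$ give $s^*\ge\hat s_k-\epsilon_k\ge\tau-\epsilon_k$ with probability $\ge1-\delta$ (respectively with the $\ln(2m/\delta)$ radius).
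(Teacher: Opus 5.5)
Your diagnosis is correct, and the defect it points to is in the statement itself, not in your argument. The paper's proof is a single line: Hoeffding's inequality for the indicators $\mathbf{1}[f_i(x)=a^*]$. That is exactly your Part (i), the lower direction. It silently identifies $\hat s_k(x)=\max_a\hat p_a$ with $\hat p_{a^*}$, so it only justifies $\hat s_k(x)\ge s^*(x)-\epsilon_k$. The threshold corollary needs the opposite direction, $\hat s_k(x)\le s^*(x)+\epsilon_k$, which means controlling the data-dependent empirical plurality.

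You should stop looking for a route to the constant $\ln(2/\delta)$, because both claims are false as quantified. Take $k=2$, $\delta=0.1$ and eight equally likely answers, so $s^*=1/8$ and $\epsilon_2=\sqrt{\ln 20/4}\approx 0.865$. With probability $\sum_a p_a^2=1/8>\delta$ the two runs agree. Then $\hat s_2=1$ and $\hat s_2-s^*=7/8>\epsilon_2$, hence $\Pr[|\hat s_2-s^*|\le\epsilon_2]=7/8<1-\delta$. With $\tau=1$ the same example has $s^*<\tau-\epsilon_2\approx 0.135$, yet $\hat s_2\ge\tau$ occurs with probability $1/8>\delta$, so the corollary fails too. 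This is the upward bias of the maximum you identified. McDiarmid cannot remove it, since it only concentrates $\hat s_k$ around $\mathbb{E}\hat s_k\ge s^*$.

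Your three-part fallback is therefore the right repair, with one caution. Part (iii) must be fed by one of three inputs, and never by Part (i):
\begin{enumerate}
\item Part (ii);
\item McDiarmid together with an explicit bound on the bias $\mathbb{E}\hat s_k-s^*$;
\item a sample-split candidate answer chosen independently of the $k$ runs.
\end{enumerate}
Part (i) cannot be used because $\hat p_{a^*}$ is unobservable and the event $\hat s_k\ge\tau$ says nothing about it. The third input does recover the stated radius, even $\ln(1/\delta)$ one-sidedly, but it certifies a different selection rule than $\hat s_k$.

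Since the corollary uses only upward deviations, a one-sided union bound suffices. If $s^*<\tau-t$, then $\hat s_k\ge\tau$ forces $\hat p_a-p_a>t$ for some $a$, so the failure probability is at most $m e^{-2kt^2}$, giving radius $\sqrt{\ln(m/\delta)/(2k)}$. If you want to avoid any dependence on the support size for free-text answers, there is also an $m$-free bound: $\Pr[k\hat s_k\ge c]\le\binom{k}{c}\sum_a p_a^c\le\binom{k}{c}(s^*)^{c-1}$.
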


\noindent\emph{Proof.}\; Hoeffding's inequality applied to $k$ i.i.d.\
Bernoulli indicators $\mathbf{1}[f_i(x) = a^*]$. \hfill$\square$

\vspace{0.5em}\noindent
For $k\!=\!3$, $\epsilon_3 = 0.78$ at $\delta\!=\!0.05$ (vacuous); for
$k\!=\!10$, $\epsilon_{10} = 0.43$; the bound becomes informative at
$k \geq 20$ ($\epsilon_{20} = 0.30$). Despite the looseness at small~$k$, our empirical
risk-coverage curves (Figure~\ref{fig:risk_coverage}) confirm that
$R(\tau)$ decreases monotonically with $\tau$ at $k\!=\!10$, and
Section~\ref{sec:conformal} validates the agreement score against a
split-conformal baseline.

\paragraph{Exchangeability assumption.}
Eq.~\ref{eq:hoeffding} requires $\{f_i(x)\}_{i=1}^k$ to be i.i.d.\
given $x$. Our setup satisfies this: each run uses an independent API
call with a fresh random seed and stateless tools (search and retrieve
operate over a fixed corpus).
Exchangeability would be violated by stateful tools (e.g., a shared
database modified by earlier runs) or autoregressive dependence across
runs. While we cannot rule out provider-side prompt caching, such
caching affects only latency, not output distributions, under standard
API contracts. Our design avoids all substantive violations.

\section{Results}

\begin{figure*}[t]
\centering
\begin{subfigure}[t]{0.48\textwidth}
\centering
\includegraphics[width=\textwidth]{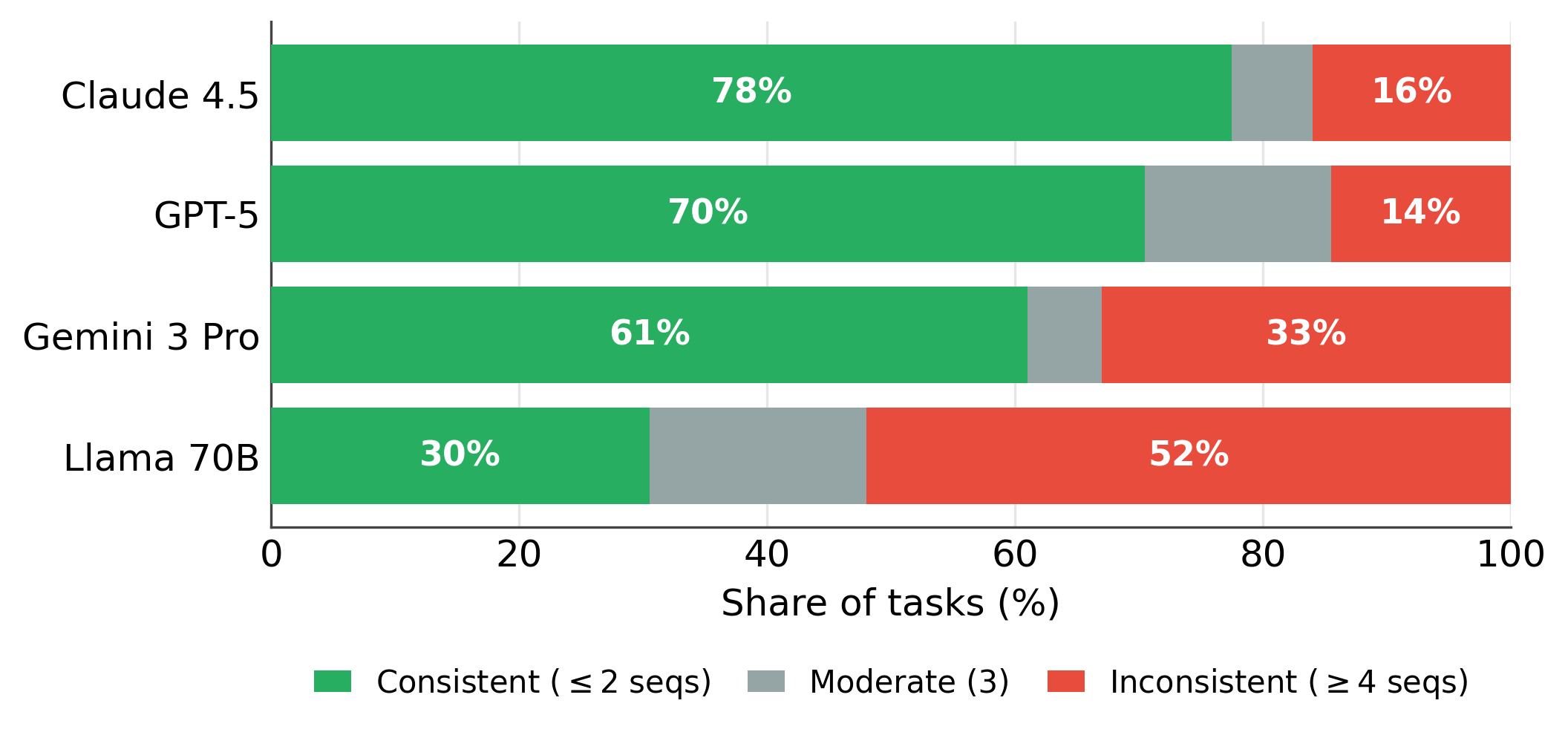}
\caption{Share of tasks by behavioral consistency level (unique action sequences).}
\label{fig:histogram}
\end{subfigure}
\hfill
\begin{subfigure}[t]{0.48\textwidth}
\centering
\includegraphics[width=\textwidth]{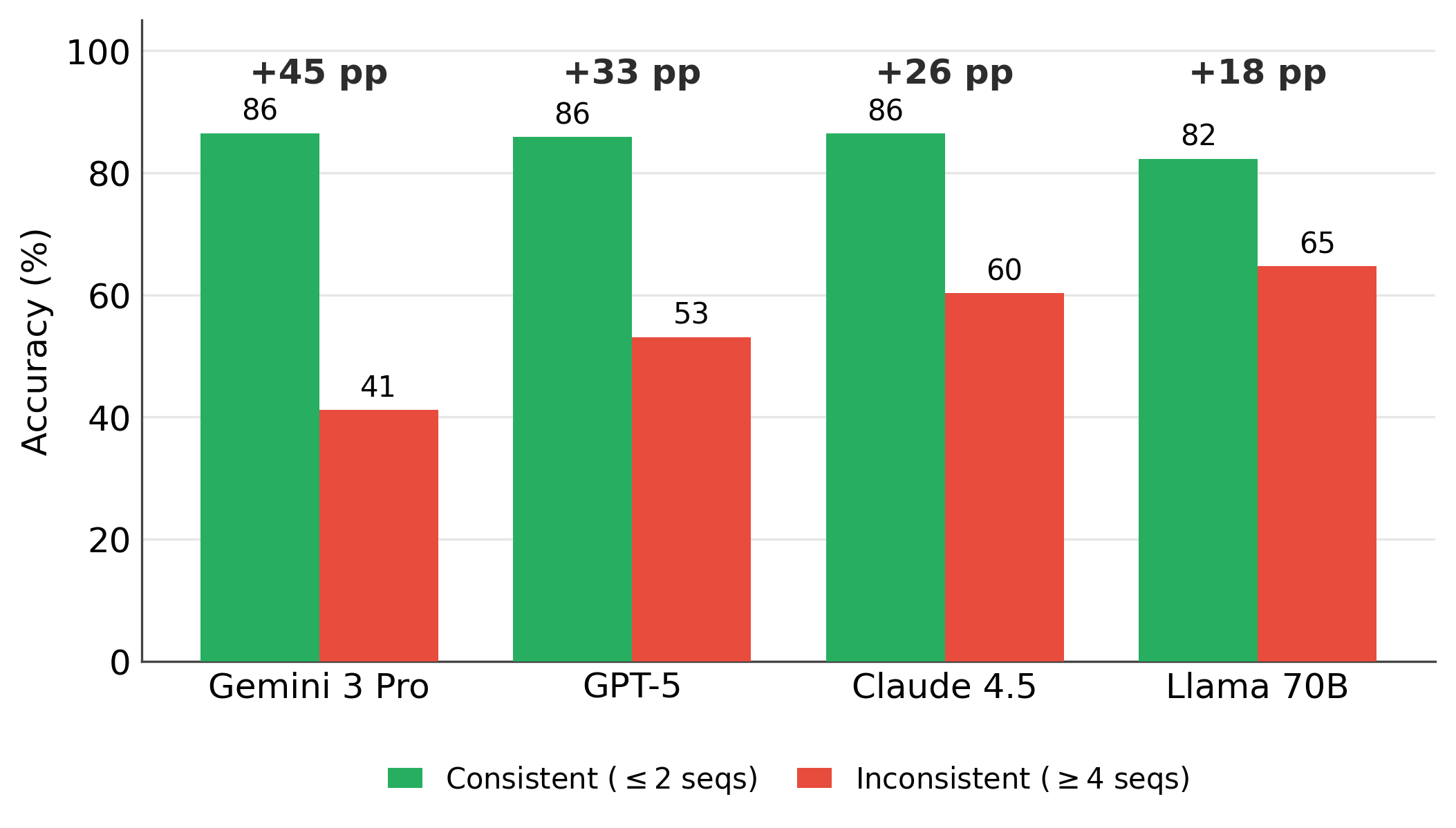}
\caption{Correctness for consistent ($\leq$2 seqs) vs.\ inconsistent ($\geq$4 seqs) tasks.}
\label{fig:correctness}
\end{subfigure}
\caption{Behavioral consistency varies across models (left) and is associated with correctness (right).}
\label{fig:main}
\end{figure*}

\subsection{Overall Model Comparison}

Table~\ref{tab:overall} summarizes 8,000 runs. Claude Sonnet 4.5 achieves both the highest accuracy and consistency (2.3 unique sequences). Llama 3.1 70B shows the most behavioral variance (4.2 unique sequences), followed by Gemini 3 Pro (3.2). All four models exhibit measurable behavioral inconsistency even on identical inputs.

\begin{table}[t]
\centering
\small
\caption{Performance across four models (200 tasks each). Seqs = unique action sequences; Var = step variance ratio.}
\label{tab:overall}
\begin{tabular}{lcccc}
\toprule
\textbf{Model} & \textbf{Correct} & \textbf{Seqs} & \textbf{Steps} & \textbf{Var} \\
\midrule
Claude Sonnet 4.5 & \textbf{81.5\%} & \textbf{2.3} & 5.1 & \textbf{22.2\%} \\
GPT-5 & 79.6\% & 2.4 & 4.2 & 36.0\% \\
Llama 3.1 70B & 73.7\% & 4.2 & 4.7 & 64.9\% \\
Gemini 3 Pro & 72.2\% & 3.2 & 5.4 & 38.5\% \\
\bottomrule
\end{tabular}
\end{table}

\subsection{Consistency Is Associated with Correctness}

Table~\ref{tab:consistency} shows the central finding: consistent tasks ($\leq$2 unique sequences) achieve 82--87\% accuracy while inconsistent tasks ($\geq$4 sequences) achieve 41--65\%. The gap is significant for all four models (Mann-Whitney $U$, all $p<0.001$; rank-biserial $r=-0.35$ to $-0.61$).\footnote{All four consistency-gap tests remain significant at $\alpha=0.05$ after Holm-Bonferroni correction (most stringent adjusted $p = 3.44 \times 10^{-5}$).}

\begin{table}[t]
\centering
\small
\caption{Correctness (\%) by consistency level. Gap in pp. $r$ = rank-biserial; $n$ = consistent/inconsistent counts. All $p<0.001$.}
\label{tab:consistency}
\begin{tabular}{l@{\hskip 4pt}c@{\hskip 4pt}c@{\hskip 4pt}c@{\hskip 4pt}c@{\hskip 4pt}c}
\toprule
\textbf{Model} & \textbf{Cons.} & \textbf{Incons.} & \textbf{Gap} & \textbf{$r$} & \textbf{$n$} \\
 & ($\leq$2) & ($\geq$4) & & & \\
\midrule
Gemini 3 Pro & 86.5 & 41.2 & 45.3 & $-$.61 & 122/66 \\
GPT-5 & 85.9 & 53.1 & 32.8 & $-$.43 & 141/29 \\
Claude Son.\ 4.5 & 86.4 & 60.3 & 26.1 & $-$.39 & 155/32 \\
Llama 70B & 82.3 & 64.7 & 17.6 & $-$.35 & 61/104 \\
\bottomrule
\end{tabular}
\end{table}

\paragraph{Controlling for difficulty.} Task difficulty confounds the consistency--correctness relationship. We compute a difficulty proxy (mean correctness across all models) and bin questions into Easy ($\geq$0.80, $n=132$), Medium (0.40--0.79, $n=32$), and Hard ($<$0.40, $n=36$). Within Easy tasks, the consistency gap is positive for Gemini ($+$16.5pp), Llama ($+$11.2pp), and GPT-5 ($+$6.8pp) but near zero for Claude ($-$0.5pp), where ceiling effects leave little room for a gap (both consistent and inconsistent Easy tasks exceed 90\% accuracy). Within Medium tasks, the gap is positive for all four models ($+$7 to $+$34pp), though cell sizes are small ($n_c$=6--16, $n_i$=10--22; Appendix~\ref{app:difficulty}). For Hard tasks, the gap reverses for three of four models under fuzzy match. This reversal is a \emph{consistent-wrong} phenomenon: of 61 consistent Hard task-model pairs, 52 (85.2\%) are scored incorrect under fuzzy match. Approximately 30\% of these are false negatives, surface-form mismatches where the model produced a semantically correct answer (e.g., ``Pasek and Paul'' vs.\ gold ``Pasek \& Paul''; ``Kelly Osbourne'' vs.\ ``Kelly Lee Osbourne''). Annotation protocol: one annotator classified each case as surface-form mismatch vs.\ genuinely incorrect; all annotated cases and classification criteria will be released. When we recompute Hard stratum gaps under token F1 $\geq$ 0.5, all four models show non-negative gaps ($+$9 to $+$16pp; Table~\ref{tab:f1_hard}), confirming that the reversal is a fuzzy-match artifact. The remaining $\sim$70\% represent true consistent-wrong behavior, where the agent confidently commits to a wrong answer. Full analysis in Appendix~\ref{app:difficulty}. The within-stratum association is weaker than the aggregate effect, consistent with difficulty being the primary confound. Partial correlations (controlling for difficulty as a continuous variable) remain significant for three of four models: Gemini ($r=-0.49$, $p<0.001$), GPT-5 ($r=-0.16$, $p=0.025$), and Llama ($r=-0.16$, $p=0.028$). The non-significant partial correlation for Claude ($r=0.01$, $p=0.88$) reflects a ceiling effect: Claude is consistent-correct on 68\% of tasks, leaving insufficient within-model variance for the partial correlation to detect.

\subsection{Consistency as a Runtime Failure Detector}
\label{sec:failure_detection}

We frame failure detection as binary classification: predicting whether the majority answer is incorrect using only consistency metrics. Table~\ref{tab:detection} reports the best feature per model.

\begin{table}[t]
\centering
\small
\caption{Failure detection via consistency metrics. H = answer entropy. Baseline precision equals the per-model error rate.}
\label{tab:detection}
\begin{tabular}{llccc}
\toprule
\textbf{Model} & \textbf{Feat.} & \textbf{Prec.} & \textbf{Rec.} & \textbf{AUROC} \\
\midrule
Gemini 3 Pro & Seq$>$4 & 72.1\% & 54.4\% & .775 \\
Llama 70B & H$>$1.5 & 44.9\% & 50.0\% & .714 \\
GPT-5 & H$>$1.5 & 35.5\% & 56.4\% & .688 \\
Claude Son.\ 4.5 & H$>$0.5 & 23.9\% & 63.6\% & .619 \\
\bottomrule
\end{tabular}
\end{table}

For the most variable models (Llama, GPT-5, Gemini), consistency metrics achieve AUROC 0.69--0.78, substantially above the 0.50 random baseline. Even for the most consistent model (Claude), AUROC reaches 0.62, validating consistency as a lightweight runtime failure signal across all four models. Gemini's best predictor is unique sequences rather than answer entropy ($>$4 sequences indicates likely failure); with 66 inconsistent tasks in 200 questions, Gemini provides sufficient signal for the sequence-count feature to outperform entropy-based thresholds.

\paragraph{Comparison to verbalized confidence.}
Table~\ref{tab:verb_conf} compares behavioral consistency to verbalized confidence (post-hoc 0--100 self-rating elicited via a single additional API call after each run) as failure detectors.\footnote{Behavioral AUROCs in Table~\ref{tab:verb_conf} differ from Table~\ref{tab:detection} because they are computed on a 100-task subset with $k\!=\!5$ runs rather than the full 200 tasks with $k\!=\!10$.} For the three models on a common 100-question split (Claude, GPT-5, Llama), behavioral consistency outperforms verbalized confidence (Claude 0.651 vs.\ 0.522; GPT-5 0.734 vs.\ 0.547; Llama 0.739 vs.\ 0.481; Figure~\ref{fig:verbconf}). Spearman correlation between the two signals is low (largely orthogonal), and combining them via logistic regression (5-fold CV) does not improve over behavioral consistency alone, suggesting behavioral consistency subsumes the predictive information in verbalized confidence. These results are consistent with prior findings that behavioral consistency is a competitive, training-free failure signal requiring no prompt engineering, no log-probs, and no model modification.

\begin{table}[t]
\centering
\small
\caption{Failure detection AUROC: behavioral consistency vs.\ verbalized confidence vs.\ combined (logistic regression, 5-fold CV), on a common 100-question split ($\times$ 5 runs). Behavioral consistency outperforms verbalized confidence for all three models, and combining the two does not improve over behavioral consistency alone.}
\label{tab:verb_conf}
\begin{tabular}{lccc}
\toprule
\textbf{Model} & \textbf{Behavioral} & \textbf{Verbalized} & \textbf{Combined} \\
\midrule
Claude Son.\ 4.5 & .651 & .522 & .646 \\
GPT-5            & .734 & .547 & .727 \\
Llama 70B        & .739 & .481 & .701 \\
\bottomrule
\end{tabular}
\end{table}

\begin{figure}[t]
\centering
\includegraphics[width=\columnwidth]{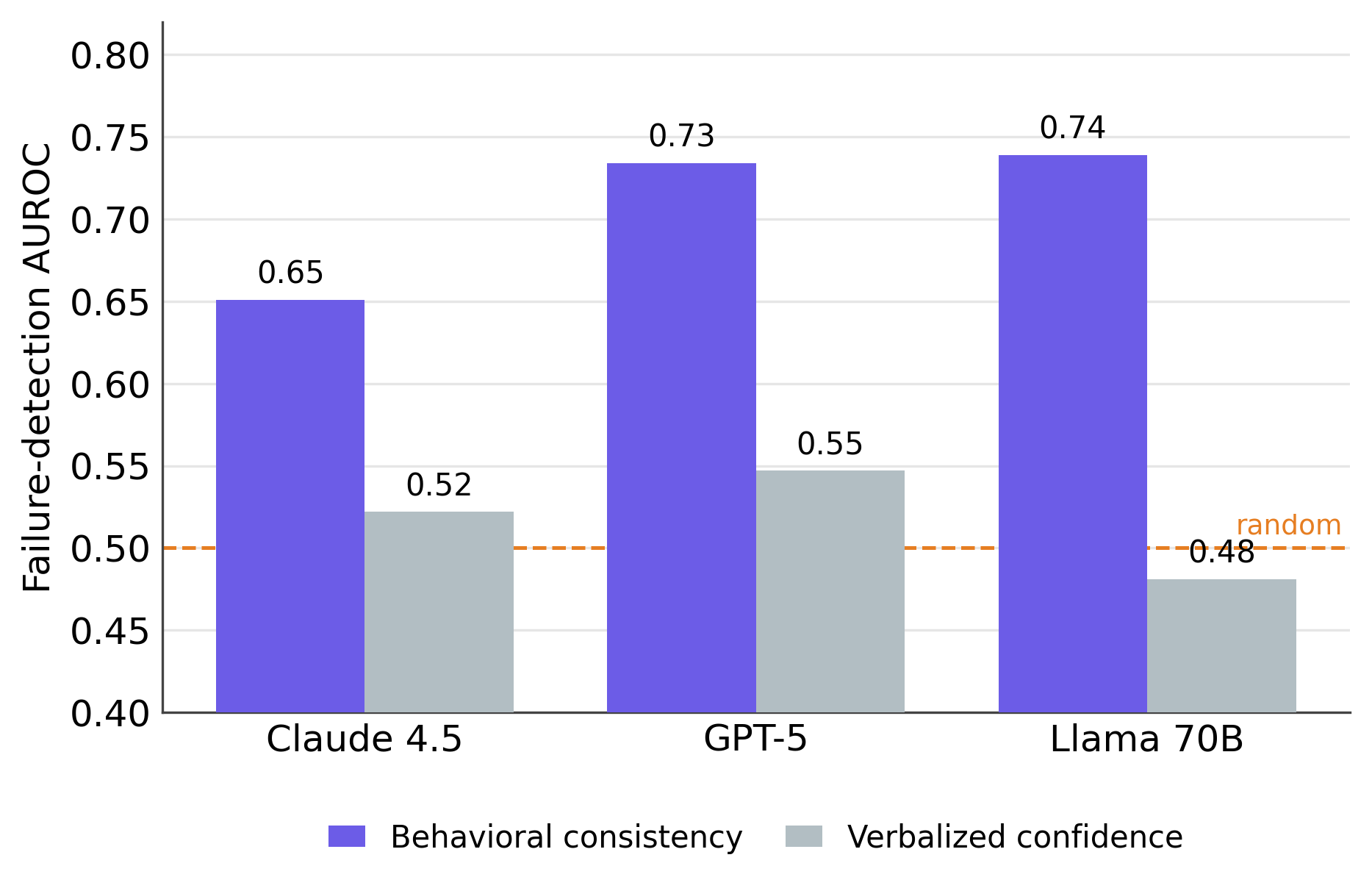}
\caption{Behavioral consistency outperforms verbalized self-confidence as a failure detector for all three models, with verbalized confidence near or below the random baseline.}
\label{fig:verbconf}
\end{figure}

\subsection{Majority Voting and Selective Prediction}
\label{sec:intervention}

Can consistency be \emph{exploited}, not just observed? We simulate a majority-vote intervention using existing multi-run data: for budget $k$, subsample $k$ runs per question, take the majority answer, and evaluate accuracy (500 bootstrap iterations).

\begin{table}[t]
\centering
\small
\caption{Majority-vote accuracy (\%) and selective prediction. $k\!=\!1$ is single-run baseline. Sel.\ = unanimous $k\!=\!3$ (answer only when all 3 agree).}
\label{tab:vote}
\begin{tabular}{l@{\hskip 5pt}c@{\hskip 5pt}c@{\hskip 5pt}c@{\hskip 5pt}c@{\hskip 5pt}c}
\toprule
\textbf{Model} & $k\!=\!1$ & $k\!=\!3$ & \textbf{Gain} & \textbf{Sel.\ Acc} & \textbf{Cov.} \\
\midrule
Llama 70B & 73.6 & 75.6 & +2.0 & 87.2 & 53.8 \\
Claude Son.\ 4.5 & 81.5 & 82.2 & +0.7 & 87.8 & 61.8 \\
GPT-5 & 79.6 & 80.1 & +0.5 & 88.1 & 55.2 \\
Gemini 3 Pro & 72.2 & 72.3 & $+$0.1 & 73.4 & 64.7 \\
\bottomrule
\end{tabular}
\end{table}

We instantiate the selective classification framework \citep{geifman2017selective} for multi-step agents, using behavioral agreement as the selection function $g_\tau$ (Eq.~\ref{eq:agreement}). Table~\ref{tab:vote} reveals that consistency works far better as a \emph{filter} than an \emph{aggregator}, a striking contrast with single-turn self-consistency \citep{wang2022self}, where majority voting improves CoT accuracy by 5--17pp. In our multi-step agentic setting, voting yields only +0--2pp because agentic errors are \emph{systematic}: an early wrong action commits the agent to an incorrect trajectory, producing consistent-wrong answers that voting cannot correct. \emph{Selective prediction} (answering only when all $k\!=\!3$ runs agree) is the stronger intervention: for Llama, Claude, and GPT-5, unanimous agreement achieves 87--88\% accuracy (6--14pp above baselines) at 54--62\% coverage. Figure~\ref{fig:filter_aggregator} visualizes this asymmetry directly.

\begin{figure*}[t]
\centering
\includegraphics[width=0.72\textwidth]{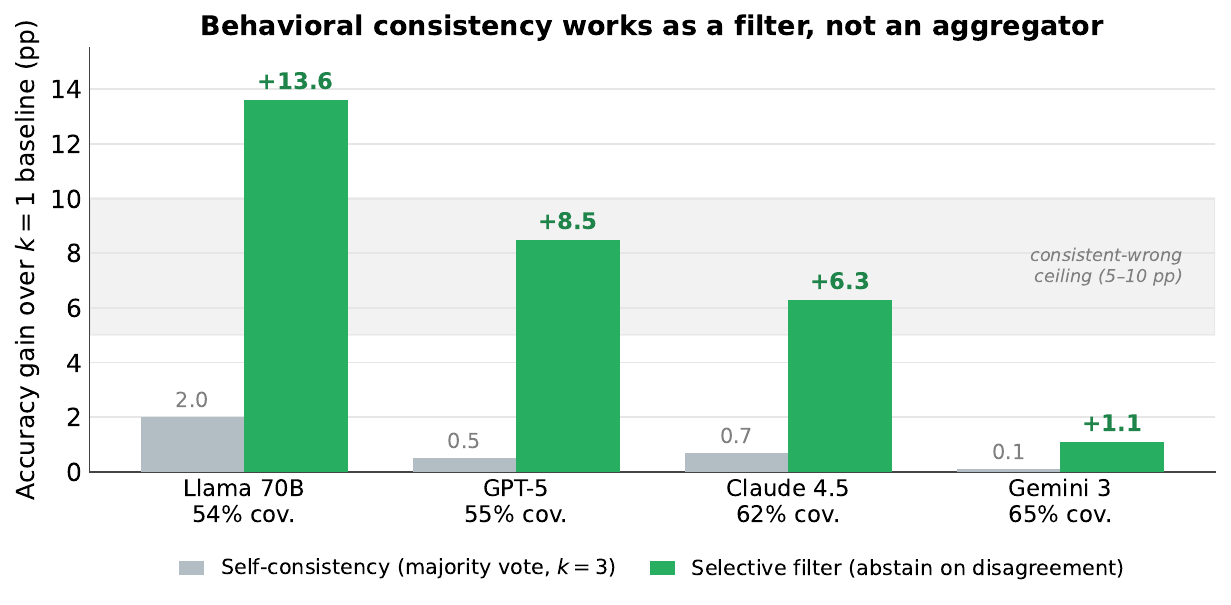}
\caption{Filter $>$ aggregator. Self-consistency (majority vote, $k\!=\!3$) gains only 0--2pp, whereas behavioral filtering (answer only when all $k\!=\!3$ runs agree) gains 6--14pp for three of four models at 54--62\% coverage. The shaded band marks the consistent-wrong ceiling (5.5--10\% of tasks), where the agent confidently commits to an incorrect answer, the fundamental limit on filtering gains.}
\label{fig:filter_aggregator}
\end{figure*}
Table~\ref{tab:taxonomy_main} gives the 2$\times$2 task-outcome taxonomy: consistent-correct tasks (25--68\%) are retained; inconsistent-wrong tasks (6--19\%) are correctly excluded; \emph{consistent-wrong} tasks (5.5--10\%) set the hard ceiling on filtering gains.

\begin{table}[t]
\centering
\small
\caption{Task-outcome taxonomy (all 200 tasks; per-model ranges). Filtering retains the top-left cell and excludes the bottom-right; Consistent-Wrong tasks (top-right) set the performance ceiling.}
\label{tab:taxonomy_main}
\begin{tabular}{lcc}
\toprule
 & \textbf{Majority Correct} & \textbf{Majority Wrong} \\
\midrule
\textbf{Consistent} ($\leq$2 seqs) & Cons.-Correct & \textbf{Cons.-Wrong} \\
 & 25--68\% & 5.5--10\% \\
\textbf{Inconsistent} ($\geq$4 seqs) & Incons.-Lucky & Incons.-Wrong \\
 & 8--38\% & 6--19\% \\
\bottomrule
\end{tabular}
\end{table}

Notably, 61\% of Claude's errors are consistent-wrong vs.\ only 25\% of Llama's, explaining why Llama benefits most from voting. Stricter thresholds yield further gains: unanimous 5/5 agreement achieves 90.2\% accuracy for GPT-5 at 48\% coverage (full taxonomy and thresholds in Appendix~\ref{app:vote}).

The computational cost of $k\!=\!3$ runs is $3\times$ the single-run cost. At current API pricing, $k\!=\!3$ HotpotQA runs cost approximately \$0.03--0.15 per task depending on the model. The accuracy gain (6--14pp) and the ability to abstain on uncertain inputs justify this overhead in deployments where reliability matters more than throughput.

Figure~\ref{fig:risk_coverage} plots the risk-coverage tradeoff as the agreement threshold $\tau$ varies. The monotonically decreasing curves confirm that behavioral agreement is a valid selection function: stricter agreement requirements yield lower risk at the cost of reduced coverage, consistent with Eq.~\ref{eq:selective_risk}.

\begin{figure}[t]
\centering
\includegraphics[width=\columnwidth]{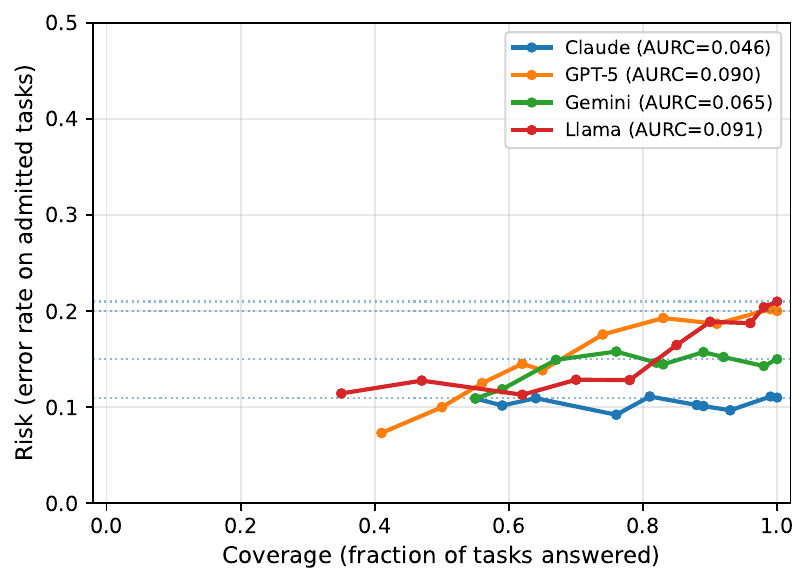}
\caption{Risk-coverage curve. As the agreement threshold $\tau$ increases (moving left), admitted tasks have lower error rates. The monotonic decrease confirms that behavioral agreement is a valid selection function in the Geifman--El-Yaniv framework \citep{geifman2017selective}. AURC \citep{zhou2025aurc} values per model in legend.}
\label{fig:risk_coverage}
\end{figure}

Figure~\ref{fig:calibration} shows the relationship between answer consistency and actual correctness. Behavioral consistency is an imperfect probability estimate: agreement levels do not match correctness exactly (ECE = 0.148, 0.153, 0.172, 0.090 for Claude, GPT-5, Gemini, Llama respectively). This calibration gap, analogous to miscalibration in neural network confidence \citep{guo2017calibration}, suggests that multi-run consistency is a useful but imperfect uncertainty signal. Importantly, this calibration property emerges without any training or model modification.

\begin{figure}[t]
\centering
\includegraphics[width=\columnwidth]{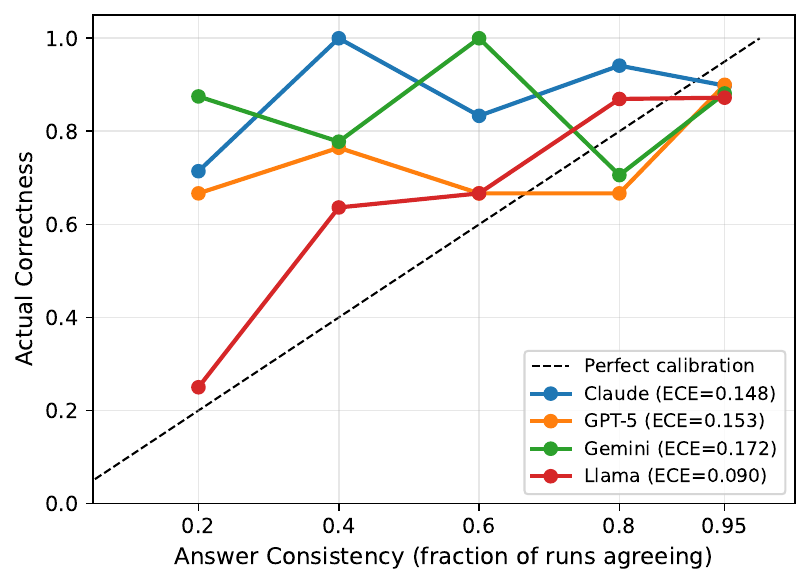}
\caption{Behavioral calibration curve. Answer consistency (fraction of runs agreeing) vs.\ actual correctness. Dashed diagonal = perfect calibration. ECE values per model in legend.}
\label{fig:calibration}
\end{figure}

\subsection{Divergence Occurs Early; Path Length Matters}
\label{sec:divergence}

Pooled across four models, 18.8\% [95\% CI: 16.2--21.6\%] of tasks diverge by step~2, but this rate varies dramatically: Llama diverges early in 50.5\% [43.6--57.4\%] of tasks vs.\ 4--6\% for Claude and GPT-5. Among Llama's early-diverging tasks, accuracy is 71.7\% vs.\ 85.8\% for late-diverging tasks. The first search query largely determines the trajectory.

Path length is strongly associated with behavioral consistency: Spearman $\rho$ between mean steps per task and unique action sequences is 0.74 (Llama), 0.54 (Gemini), 0.53 (Claude), 0.50 (GPT-5); pooled $\rho=0.51$, all $p<0.001$. Path length also tracks correctness in the early-vs-late divergence comparison above (e.g., Llama 71.7\% accuracy on early-diverging tasks vs.\ 85.8\% on late-diverging tasks). Longer paths indicate backtracking and uncertainty; each additional step is an opportunity to diverge and err.

Figure~\ref{fig:step_agreement} quantifies this timing by measuring the mean pairwise action-sequence agreement rate at each step across 100 tasks $\times$ 10 runs for all four models. Agreement drops after step~1 for all models; Llama exhibits the steepest decline (0.38 at step~1, 0.27 at step~2, 0.13 at step~3), while Claude retains high agreement through step~2 (0.74), consistent with its lower behavioral variance. Step-2 action agreement predicts final majority-vote correctness with AUROC 0.60 (Claude), 0.58 (Gemini), 0.50 (GPT-5), 0.49 (Llama), near chance, indicating that \emph{where} divergence first occurs is informative about model behavior but that \emph{answer-level} consistency (Table~\ref{tab:detection}, AUROC 0.62--0.78) is the stronger failure signal.

\begin{figure}[t]
\centering
\includegraphics[width=\columnwidth]{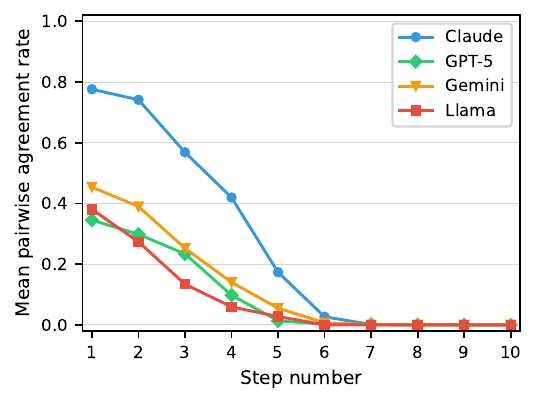}
\caption{Mean pairwise action-sequence agreement rate per step (100 tasks $\times$ 10 runs, four models). Llama diverges earliest and most sharply; Claude retains high agreement through step~2. Step-2 agreement alone yields AUROC 0.49--0.60 for failure prediction.}
\label{fig:step_agreement}
\end{figure}

\subsection{Comparison to Split-Conformal Selective Prediction}
\label{sec:conformal}

A natural question is whether the agreement score provides genuine
selective-prediction value, or whether a standard conformal procedure
would achieve comparable results. We compare against a
\emph{split-conformal baseline}: calibrate the agreement threshold on
70\% of tasks and evaluate on the held-out 30\%, repeating over 500
random splits. This provides a distribution-free coverage guarantee
at any target level \citep{angelopoulos2023conformal, bates2021distribution}.

Figure~\ref{fig:k_sensitivity} plots coverage vs.\ accuracy as the
unanimity requirement $k$ increases from 2 to 10 for all four models
(100 tasks $\times$ 10 runs each). Behavioral
consistency closely tracks the conformal baseline for all four
models, confirming that the agreement score is a well-calibrated
selection function \emph{without requiring a separate calibration set}.
This is the key practical advantage: conformal methods need a held-out
calibration split, while behavioral consistency uses only the $k$ runs
already executed for prediction, so no data is ``wasted'' on calibration.

\begin{figure}[t]
\centering
\includegraphics[width=\columnwidth]{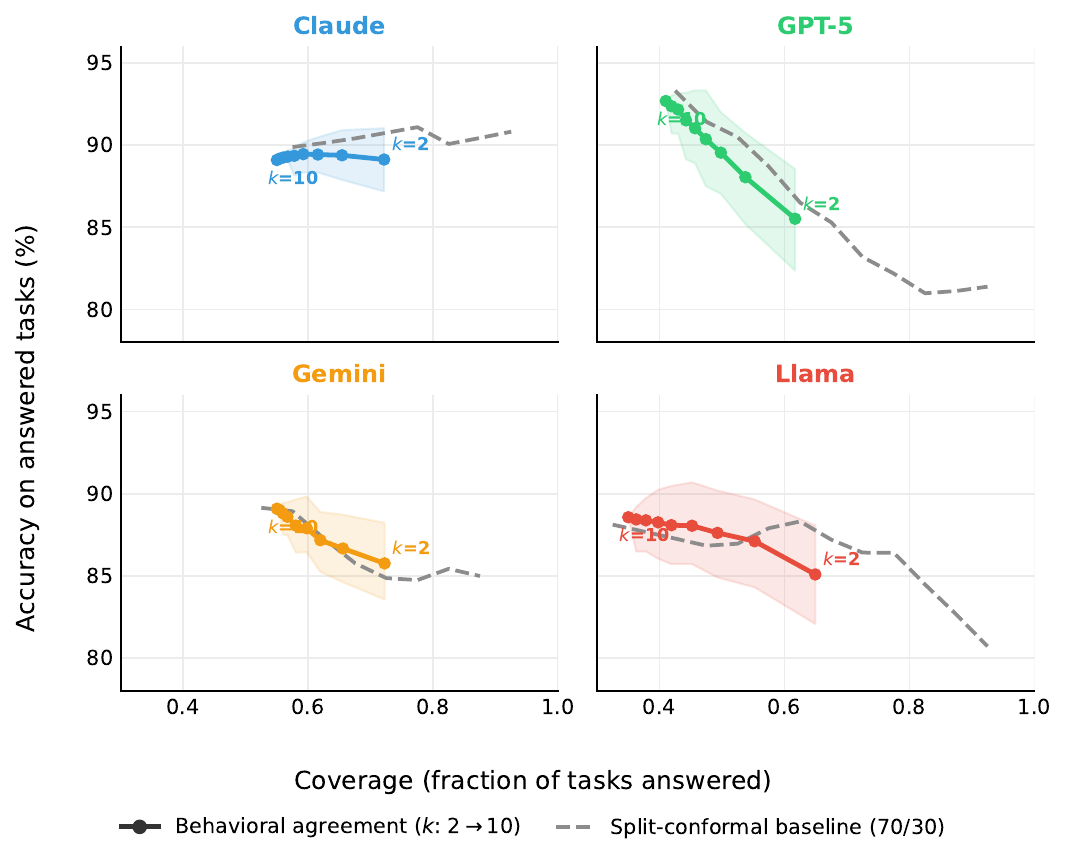}
\caption{Coverage vs.\ accuracy as unanimity requirement $k$ increases
from 2 to 10. Shaded bands = 10--90\% bootstrap intervals. Dashed:
split-conformal baseline (70/30 calibration split). Behavioral
consistency matches the conformal baseline without
requiring a calibration set.}
\label{fig:k_sensitivity}
\end{figure}

\subsection{Cross-Benchmark Validation: SWE-bench}
\label{sec:swebench}

To test generalizability beyond HotpotQA's minimal action space, we conduct a cross-benchmark validation on SWE-bench Verified \citep{swebench, swebench_verified}: 50 tasks across 5 open-source repositories (astropy, django, matplotlib, scikit-learn, sympy), 5 runs per model-task pair (1{,}000 total runs), using all four models with an identical bash-only scaffold (temperature 0.5).

\begin{table}[t]
\centering
\small
\caption{SWE-bench cross-benchmark validation (50 tasks $\times$ 5 runs across 5 repositories). CV = std/mean of step counts. The HotpotQA consistency hierarchy is perfectly preserved across all four models.}
\label{tab:swebench}
\begin{tabular}{lcccc}
\toprule
& \textbf{Claude} & \textbf{GPT-5} & \textbf{Gemini} & \textbf{Llama} \\
\midrule
Tasks & 50 & 50 & 50 & 50 \\
Repositories & 5 & 5 & 5 & 5 \\
CV & \textbf{17.7\%} & 30.3\% & 46.8\% & 68.9\% \\
Mean steps & 53.3 & 10.5 & 81.8 & 27.9 \\
\bottomrule
\end{tabular}
\end{table}

The consistency hierarchy is perfectly preserved across all four models (Claude $<$ GPT-5 $<$ Gemini $<$ Llama by CV), matching the HotpotQA ranking. This hierarchy holds across a 3-tool QA environment (HotpotQA) and unrestricted-bash coding tasks spanning 5 open-source repositories (SWE-bench), suggesting that behavioral consistency is a stable, benchmark-independent model property. Resolution rates on the original 10-task astropy subset were Claude 58\%, GPT-5 32\%, Llama 4\%, preserving the accuracy hierarchy. Full cross-benchmark comparison in Appendix~\ref{app:swebench}.

\paragraph{Trajectory length spans $\sim$8$\times$.} Mean steps range from GPT-5 (10.5) to Gemini (81.8), with Llama (27.9) and Claude (53.3) in between, a 7.8$\times$ ratio the 3-tool HotpotQA setup masks (4.2--5.4; Table~\ref{tab:overall}). Length and CV are \emph{not} co-monotone (Gemini is the longest yet second-most-consistent; Llama is shorter than Claude/Gemini yet most variable), so they capture distinct dimensions, with direct cost-, latency-, and step-cap implications.

\subsection{Ranking Instability}
\label{sec:rankings}

We perform 10,000 bootstrap iterations across four models on all 200 questions, sampling one run per question per model. \textbf{29.3\% [28.4--30.1\%] of single-run evaluations produce a ranking differing from the multi-run ground truth} (Claude 81.5\% $>$ GPT-5 79.6\% $>$ Llama 73.7\% $>$ Gemini 72.2\%; Figure~\ref{fig:ranking}). Nearly one in three single-run evaluations would misrank models. The instability stems from narrow accuracy gaps between adjacent models and within-model variance. Agent benchmarks should report multi-run statistics.

\begin{figure}[t]
\centering
\includegraphics[width=\columnwidth]{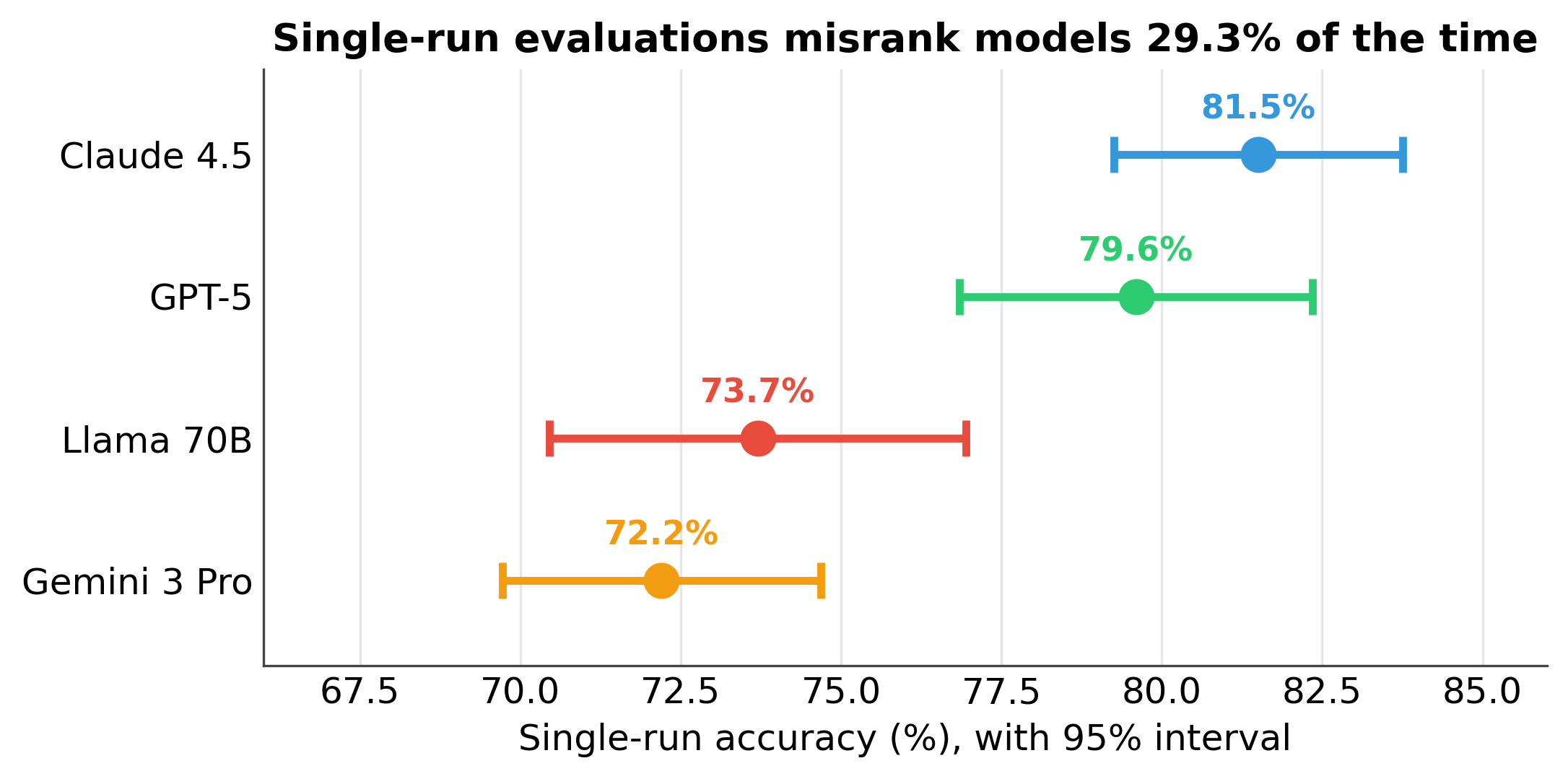}
\caption{Single-run accuracy with 95\% intervals (one run sampled per question). Adjacent models' intervals overlap, with Claude/GPT-5 near the top and Llama/Gemini at the bottom (1.5pp apart), so single-run evaluations reorder models 29.3\% of the time.}
\label{fig:ranking}
\end{figure}

\section{Discussion}

\paragraph{Practical implications.}
Consistency monitoring is feasible and actionable: even $k\!=\!3$ runs deliver the selective-prediction gains of Section~\ref{sec:intervention}, the ceiling is set by the consistent-wrong fraction (5.5--10\%; Table~\ref{tab:taxonomy}), and the step~2 bottleneck (Section~\ref{sec:divergence}) suggests that improving first-query formulation could reduce downstream variance.

\paragraph{Consistency as implicit, distribution-free calibration.}
Traditional calibration asks whether a model's \emph{stated} confidence matches its accuracy \citep{kadavath2022language}; behavioral consistency offers an alternative \emph{behavioral} notion in which cross-run disagreement plays the role of deep-ensemble variance \citep{lakshminarayanan2017simple} and, crucially, requires no calibration set, no log-probs, and no prompting strategy. This positions it alongside conformal prediction \citep{angelopoulos2023conformal} and distribution-free risk control \citep{bates2021distribution} as a fully black-box, training-free path to uncertainty in agentic systems.

\paragraph{Limitations and future work.}
Our analysis primarily uses HotpotQA with a 3-tool action space; the SWE-bench cross-benchmark validation provides evidence of generalizability across action spaces and repositories. The temperature ablation uses a smaller sample (20 questions). We cannot establish causality from observational data. Gemini 3 Pro required a \texttt{top\_p} workaround to achieve stochastic sampling (Appendix~\ref{sec:gemini_note}). The consistent-wrong annotation for Hard-stratum tasks was performed by a single annotator; all cases and classification criteria are provided for reproducibility. SWE-bench resolution rates are available only for the original 10-task astropy subset; the 50-task expanded validation uses behavioral metrics (CV, step counts) without per-task resolution data. We plan extensions to multimodal agents and analytical reasoning (FinQA, GAIA).

\section{Conclusion}

Behavioral consistency in LLM agents is measurable, associated with correctness (effect sizes that survive difficulty controls), and practically exploitable as both a runtime failure signal (AUROC 0.62--0.78) and a selective-prediction filter (87--88\% accuracy at 54--62\% coverage). With single-run evaluations misranking models 29.3\% of the time, these findings argue for multi-run evaluation, consistency-based selective prediction, and runtime consistency monitoring as standard practice. Code and data will be released upon publication.

\section*{Impact Statement}

We demonstrate that behavioral consistency provides a training-free, model-agnostic uncertainty signal for LLM agents. Our selective prediction results (87--88\% accuracy at 54--62\% coverage) and failure detection (AUROC 0.62--0.78) suggest practical deployment value for monitoring and routing in agentic systems. Our finding that single-run evaluations produce incorrect model rankings 29\% of the time has implications for benchmark reliability. One risk of consistency-based filtering is that it may systematically abstain on underrepresented task subgroups (e.g., rare question types or minority-language inputs), masking failures that disproportionately affect certain users; deployments should pair behavioral filtering with disaggregated evaluation across task demographics.

\bibliography{references}
\bibliographystyle{icml2026}

\appendix

\section{Gemini 3 Pro: Sampling Configuration}
\label{sec:gemini_note}

Gemini 3 Pro on our cloud API endpoint initially exhibited near-deterministic outputs at $T\!=\!0.7$, producing identical action sequences in $>$99\% of runs. We determined that the endpoint does not fully respect the \texttt{temperature} parameter without explicitly setting \texttt{top\_p} $< 1.0$. After adding \texttt{top\_p = 0.95}, outputs became properly stochastic (3.2 unique sequences on average, comparable to GPT-5). All Gemini results, including temperature ablation, use the corrected stochastic configuration.

\section{Temperature Ablation: Full Results}
\label{app:temp}

\noindent
\begin{minipage}{\columnwidth}
\centering
\small
\captionof{table}{Full temperature ablation (20 questions, 5 runs each). $T\!=\!0.7$ uses same questions from the main experiment.}
\begin{tabular}{llcc}
\toprule
\textbf{Model} & \textbf{Temp} & \textbf{Accuracy} & \textbf{Unique Seqs} \\
\midrule
\multirow{3}{*}{Llama 3.1 70B} & 0.0 & 82.0\% & 2.3 \\
& 0.3 & 83.0\% & 2.1 \\
& 0.7 & 82.0\% & 3.3 \\
\midrule
\multirow{3}{*}{Claude Sonnet 4.5} & 0.0 & 85.0\% & 1.0 \\
& 0.3 & 87.0\% & 1.6 \\
& 0.7 & 85.0\% & 1.7 \\
\midrule
\multirow{3}{*}{GPT-5} & 0.0 & 85.0\% & 1.4 \\
& 0.3 & 87.0\% & 1.5 \\
& 0.7 & 87.0\% & 1.6 \\
\midrule
\multirow{3}{*}{Gemini 3 Pro} & 0.0 & 75.0\% & 1.0 \\
& 0.3 & 76.0\% & 2.0 \\
& 0.7 & 75.0\% & 2.2 \\
\bottomrule
\end{tabular}
\end{minipage}

\section{Difficulty Stratification: Full Results}
\label{app:difficulty}

Task difficulty is computed as mean correctness across all runs from all four models. Strata: Easy ($\geq$0.80), Medium (0.40--0.79), Hard ($<$0.40). Of 200 questions: 132 Easy, 32 Medium, 36 Hard.

\begin{table}[h]
\centering
\small
 \caption{Consistency--correctness gap (pp) within difficulty strata. $n_c$/$n_i$ = number of consistent/inconsistent tasks per model in that stratum.}
\small
\begin{tabular}{lcccc}
\toprule
\textbf{Stratum} & \textbf{Llama} & \textbf{GPT-5} & \textbf{Claude} & \textbf{Gemini} \\
\midrule
Easy (132) & $+$11.2pp & $+$6.8pp & $-$0.5pp & $+$16.5pp \\
 & {\scriptsize 46/58} & {\scriptsize 106/8} & {\scriptsize 119/9} & {\scriptsize 100/23} \\
Medium (32) & $+$6.7pp & $+$15.0pp & $+$6.4pp & $+$34.3pp \\
 & {\scriptsize 6/22} & {\scriptsize 16/10} & {\scriptsize 16/11} & {\scriptsize 9/21} \\
Hard (36) & $-$5.6pp & $+$8.9pp & $-$5.8pp & $-$4.2pp \\
 & {\scriptsize 9/24} & {\scriptsize 19/11} & {\scriptsize 20/12} & {\scriptsize 13/22} \\
\bottomrule
\end{tabular}
\end{table}

Partial correlations between unique sequences and accuracy, controlling for difficulty (continuous): Gemini $r=-0.487$ ($p<0.001$), GPT-5 $r=-0.159$ ($p=0.025$), Llama $r=-0.156$ ($p=0.028$), Claude $r=0.011$ ($p=0.877$). The association remains significant for three of four models.

\paragraph{Consistent-Wrong analysis.} The Hard stratum reversal (negative gaps above) is explained by \emph{consistent-wrong} behavior: on tasks where the model lacks knowledge, consistency reflects confident commitment to an incorrect answer rather than reliable competence. We classify each model$\times$Hard-task pair into four categories based on unique action sequences and correctness:

\begin{itemize}[nosep,leftmargin=*]
  \item \textbf{Consistent-Correct} ($\leq$2 seqs, majority correct): 9/144 (6.3\%)
  \item \textbf{Consistent-Wrong} ($\leq$2 seqs, majority wrong): 52/144 (36.1\%)
  \item \textbf{Inconsistent-Lucky} ($\geq$4 seqs, $\geq$1 correct run): 27/144 (18.8\%)
  \item \textbf{Inconsistent-Wrong} ($\geq$4 seqs, 0 correct): 42/144 (29.2\%)
\end{itemize}

Among consistent Hard tasks, 85.2\% (52/61) are Consistent-Wrong: the model locks into one incorrect reasoning path. Of these, approximately 30\% are false negatives under token F1 $\geq$ 0.5: the model produced a semantically correct answer in a different surface form (e.g., ``Pasek and Paul'' for gold ``Pasek \& Paul'', ``Kelly Osbourne'' for gold ``Kelly Lee Osbourne''). The remaining $\sim$70\% are genuinely incorrect. When recomputed under token F1 $\geq$ 0.5, the Hard reversal disappears: all four models show non-negative consistency gaps ($+$9 to $+$16pp; Table~\ref{tab:f1_hard}), confirming that the consistency--correctness association holds under a more permissive metric. Conversely, 39.1\% (27/69) of inconsistent Hard tasks achieve at least one correct run, confirming that behavioral variability enables exploration that occasionally recovers the right answer. Per-model consistent-wrong rates (fuzzy): Llama 100\% (9/9), GPT-5 79\% (15/19), Claude 80\% (16/20), Gemini 92\% (12/13).

\begin{table}[h]
\centering
\small
\caption{Hard stratum consistency gap (pp) under token F1 $\geq$ 0.5. All gaps are non-negative, confirming the fuzzy-match reversal is a metric artifact.}
\label{tab:f1_hard}
\scriptsize
\begin{tabular}{lcccc}
\toprule
& \textbf{Llama} & \textbf{GPT-5} & \textbf{Claude} & \textbf{Gemini} \\
\midrule
Gap (pp) & $+$8.9 & $+$16.1 & $+$9.7 & $+$11.5 \\
$n_c$/$n_i$ & 9/24 & 19/11 & 20/12 & 13/22 \\
\bottomrule
\end{tabular}
\end{table}

\section{Cross-Benchmark Comparison}
\label{app:swebench}

\begin{table}[h]
\centering
\small
\caption{HotpotQA vs.\ SWE-bench. Consistency ranking preserved across benchmark complexities, action spaces, and repositories.}
\begin{tabular}{lcc}
\toprule
& \textbf{HotpotQA} & \textbf{SWE-bench} \\
\midrule
Action space & 3 tools & Unrestricted bash \\
Tasks & 200 & 50 \\
Repositories & 1 (Wikipedia) & 5 (Python OSS) \\
Avg trajectory & 3--8 steps & 10--82 steps \\
Best model CV & 8.2\% (Claude) & 17.7\% (Claude) \\
Worst model CV & 22.1\% (Llama) & 68.9\% (Llama) \\
Models tested & 4 & 4 \\
Ranking preserved & \multicolumn{2}{c}{$\checkmark$ (Claude $<$ GPT-5 $<$ Gemini $<$ Llama)} \\
\bottomrule
\end{tabular}
\end{table}

\section{GPT-4o: Additional Validation on 100-Question Subset}
\label{app:gpt4o}

We additionally evaluated GPT-4o (OpenAI) on a 100-question subset (questions 1--100) collected when the model was available via our API provider. Results are consistent with the main findings: GPT-4o shows 2.5 unique sequences per task (comparable to GPT-5's 2.4), 73.3\% accuracy, and a consistency--correctness gap of 35.9pp ($r=-0.47$, $p<0.001$). Failure detection achieves AUROC 0.661, and the path length correlation is $\rho=0.43$ ($p<0.001$). Temperature ablation shows GPT-4o gains $+$9pp accuracy at $T\!=\!0.0$, the largest temperature effect observed across all models tested. These results confirm that the consistency hierarchy extends to earlier-generation frontier models and that our findings are not specific to the four primary models in the main text.

\section{Question Type Analysis}
\label{app:qtype}

On the first 100 questions with type annotations, we compare bridge questions (multi-hop, $n=79$) with comparison questions (yes/no, $n=21$) using Llama 3.1 70B. Comparison questions show \emph{higher} correctness (80.0\% vs.\ 75.7\%) but \emph{lower} answer consistency (62.4\% vs.\ 76.6\%) and lower step variance (41\% vs.\ 63\%), highlighting that answer consistency and explanation consistency are distinct dimensions.

\section{Metric Sensitivity}
\label{app:metrics}

We verify robustness to the choice of correctness metric. Under exact match (EM), overall accuracy decreases (e.g., Claude: 43.6\% EM vs.\ 81.5\% fuzzy) but model rankings and consistency gaps are preserved. Under token F1 $>$ 0.5, results closely track fuzzy match. The consistency--correctness association holds across all three metrics for all four models.

\section{Majority Voting and Selective Prediction: Full Results}
\label{app:vote}

\paragraph{Task-outcome taxonomy.} Table~\ref{tab:taxonomy} classifies each task by consistency ($\leq$2 vs.\ $\geq$4 unique sequences) and correctness (majority answer matches gold). The taxonomy explains why filtering outperforms voting: consistent-wrong tasks (5.5--10\%) set the ceiling on filtering, while inconsistent-correct tasks (8--38\%), where voting could theoretically consolidate the right answer, are rare except for Llama.

\begin{table}[h]
\centering
\small
\caption{Task-outcome taxonomy (\% of tasks). Consistent: $\leq$2 unique seqs; Inconsistent: $\geq$4. Rows sum to $<$100\% because tasks with exactly 3 unique sequences (6--17\% per model) fall between thresholds.}
\label{tab:taxonomy}
\scriptsize
\begin{tabular}{l@{\hskip 4pt}c@{\hskip 4pt}c@{\hskip 4pt}c@{\hskip 4pt}c}
\toprule
\textbf{Model} & \textbf{Cons-Right} & \textbf{Cons-Wrong} & \textbf{Inc-Right} & \textbf{Inc-Wrong} \\
\midrule
Claude Son.\ 4.5 & 67.5 & 10.0 & 10.5 & 5.5 \\
GPT-5 & 61.5 & 9.0 & 7.5 & 7.0 \\
Gemini 3 Pro & 52.0 & 9.0 & 14.0 & 19.0 \\
Llama 70B & 25.0 & 5.5 & 38.0 & 14.0 \\
\bottomrule
\end{tabular}
\end{table}

Table~\ref{tab:vote_full} reports majority-vote accuracy across all budget levels. Gains are largest for Llama ($+$4.4pp at $k\!=\!10$) and Claude ($+$2.0pp), while Gemini shows no improvement, consistent with the hypothesis that voting helps only when the model explores diverse-but-sometimes-correct paths.

\begin{table}[h]
\centering
\small
\caption{Majority-vote accuracy (\%) by budget $k$ (500 bootstrap iterations). Gain is relative to $k\!=\!1$.}
\label{tab:vote_full}
\scriptsize
\begin{tabular}{l@{\hskip 3pt}c@{\hskip 3pt}c@{\hskip 3pt}c@{\hskip 3pt}c@{\hskip 3pt}c}
\toprule
\textbf{Model} & $k\!=\!1$ & $k\!=\!3$ & $k\!=\!5$ & $k\!=\!7$ & $k\!=\!10$ \\
\midrule
Llama 70B & 73.6 & 75.6{\tiny+2.0} & 76.9{\tiny+3.3} & 77.2{\tiny+3.6} & 78.0{\tiny+4.4} \\
Claude S.\ 4.5 & 81.5 & 82.2{\tiny+0.7} & 82.6{\tiny+1.0} & 82.9{\tiny+1.4} & 83.5{\tiny+2.0} \\
GPT-5 & 79.6 & 80.1{\tiny+0.5} & 80.2{\tiny+0.6} & 80.3{\tiny+0.7} & 80.5{\tiny+0.9} \\
Gemini 3 Pro & 72.2 & 72.3{\tiny$+$0.1} & 72.2{\tiny$-$0.0} & 72.2{\tiny$-$0.0} & 71.7{\tiny$-$0.4} \\
\bottomrule
\end{tabular}
\end{table}

Table~\ref{tab:selective_full} reports selective prediction at multiple agreement thresholds. Stricter thresholds yield higher accuracy but lower coverage. The accuracy--coverage tradeoff is favorable for Llama, Claude, and GPT-5; Gemini shows minimal improvement under selective prediction (+1.2pp at 3/3 unanimity), consistent with its high consistent-wrong rate relative to inconsistent-lucky rate.

\begin{table}[h]
\centering
\small
\caption{Selective prediction: accuracy (\%) and coverage (\%) at varying agreement thresholds.}
\label{tab:selective_full}
\scriptsize
\begin{tabular}{llcc}
\toprule
\textbf{Model} & \textbf{Threshold} & \textbf{Acc.} & \textbf{Cov.} \\
\midrule
\multirow{5}{*}{Llama 70B}
& Unanimous 3/3 & 87.2 & 53.8 \\
& Majority 2/3 & 82.2 & 82.5 \\
& Unanimous 5/5 & 88.7 & 44.2 \\
& 4/5 & 86.7 & 63.8 \\
& Majority 3/5 & 83.6 & 80.3 \\
\midrule
\multirow{5}{*}{Claude Son.\ 4.5}
& Unanimous 3/3 & 87.8 & 61.8 \\
& Majority 2/3 & 84.4 & 84.4 \\
& Unanimous 5/5 & 88.4 & 55.4 \\
& 4/5 & 87.8 & 67.3 \\
& Majority 3/5 & 85.2 & 82.5 \\
\midrule
\multirow{5}{*}{GPT-5}
& Unanimous 3/3 & 88.1 & 55.2 \\
& Majority 2/3 & 83.2 & 78.8 \\
& Unanimous 5/5 & 90.2 & 48.1 \\
& 4/5 & 86.9 & 62.3 \\
& Majority 3/5 & 84.1 & 75.3 \\
\midrule
\multirow{5}{*}{Gemini 3 Pro}
& Unanimous 3/3 & 73.4 & 64.7 \\
& Majority 2/3 & 71.6 & 85.4 \\
& Unanimous 5/5 & 74.3 & 58.4 \\
& 4/5 & 72.4 & 71.6 \\
& Majority 3/5 & 71.6 & 83.2 \\
\bottomrule
\end{tabular}
\end{table}

Table~\ref{tab:stratum_vote} shows how the $k\!=\!3$ majority-vote gain varies by difficulty stratum. Gains concentrate on Easy and Medium tasks; Hard tasks show no improvement, consistent with the finding that consistency on hard tasks reflects consistent-wrong behavior (Appendix~\ref{app:difficulty}).

\paragraph{Why self-consistency works less in agentic settings.} \citet{wang2022self} report 5--17pp gains from majority voting on single-turn CoT tasks (arithmetic, commonsense). Our agentic setting shows only 0--2pp gains. The difference is structural: in single-turn CoT, errors are primarily in the final reasoning step and the answer space is small, so diverse chains often converge on the correct answer. In multi-step agentic settings, an early wrong action (e.g., searching for the wrong entity at step~2) commits the agent to an incorrect trajectory for all subsequent steps, producing systematic errors that voting cannot correct. This makes consistency more valuable as a \emph{diagnostic} signal (is the agent confident?) than as an \emph{ensembling} strategy (pick the best answer).

\begin{table}[h]
\centering
\small
\caption{Per-stratum $k\!=\!3$ majority-vote gain (pp) over $k\!=\!1$.}
\label{tab:stratum_vote}
\scriptsize
\begin{tabular}{lcccc}
\toprule
\textbf{Stratum} & \textbf{Llama} & \textbf{Claude} & \textbf{GPT-5} & \textbf{Gemini} \\
\midrule
Easy & $+$2.4 & $+$0.3 & $+$0.7 & $+$0.7 \\
Medium & $+$2.4 & $+$1.8 & $+$1.8 & $-$1.2 \\
Hard & $-$0.0 & $+$0.9 & $-$1.4 & $-$0.5 \\
\bottomrule
\end{tabular}
\end{table}

\section{Temperature Ablation}
\label{app:temp_ablation_fig}

Figure~\ref{fig:temp} shows temperature ablation results across all four models. All models show monotonically increasing unique sequences with temperature. Claude and Gemini achieve near-perfect consistency at $T\!=\!0.0$ (1.0 sequences) while Llama retains 2.3, suggesting that model architecture matters beyond sampling noise. The finding that $T\!=\!0.0$ does not eliminate divergence for Llama or GPT-5 indicates that agentic inconsistency arises from factors beyond token-level sampling.

\begin{figure}[h]
\centering
\includegraphics[width=\columnwidth]{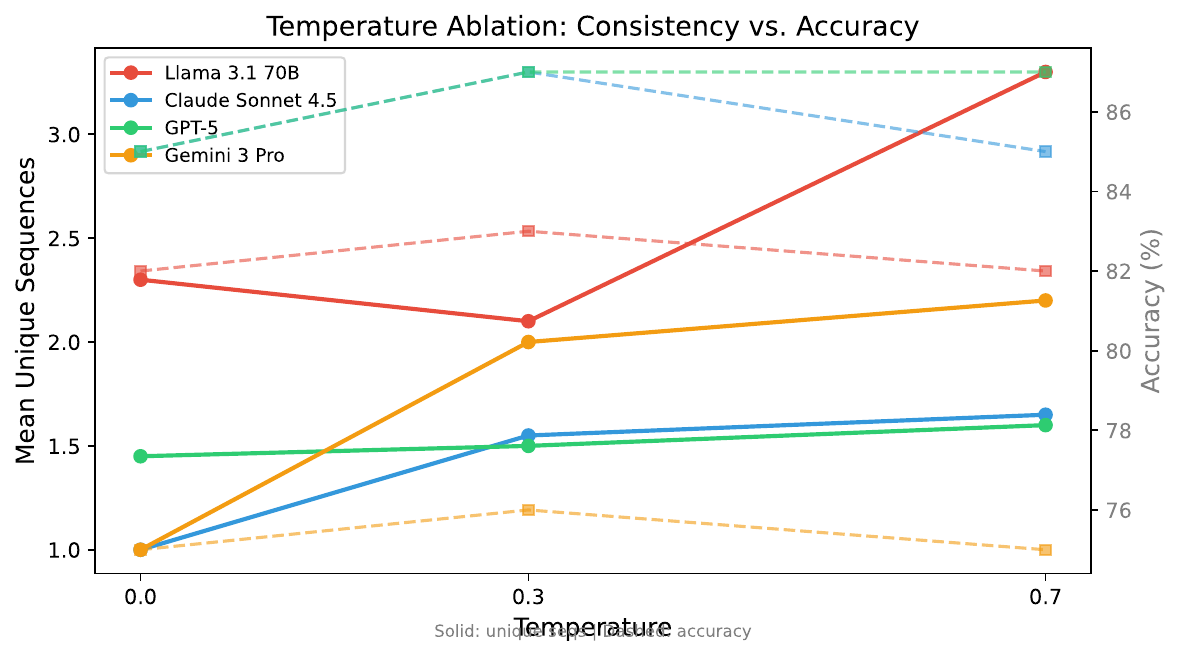}
\caption{Temperature ablation (20 questions, 5 runs). Solid: unique sequences (left axis); dashed: accuracy (right axis).}
\label{fig:temp}
\end{figure}

\end{document}